\title{Nonparametric Weight Initialization of Neural Networks via Integral Representation}
\author{
Sho~Sonoda, Noboru~Murata \\
Schools of Advanced Science and Engineering \\
Waseda University \\
Shinjuku, Tokyo 169-8555 \\
\texttt{s.sonoda0110@toki.waseda.jp}, \\
\texttt{noboru.murata@eb.waseda.ac.jp} \\
}
\newcommand{\refalg}[1]{Alg.\ref{#1}}
\newcommand{\refeq}[1]{Eq.\ref{#1}}
\newcommand{\reffig}[1]{Fig.\ref{#1}}
\newcommand{\reftab}[1]{Tab.\ref{#1}}
\newcommand{\bx}{\mathbf{x}}
\newcommand{\ba}{\mathbf{a}}
\newcommand{\iid}{\overset{i.i.d.}{\sim}}
\newcommand{\supp}{\mathrm{supp\,}}
\newtheorem{thm}{Theorem}[section]
\newtheorem{prop}[thm]{Proposition}
\theoremstyle{definition}
\theoremstyle{remark}
\begin{document}

\maketitle

\begin{abstract}
A new initialization method for hidden parameters in a neural network is proposed.
Derived from the integral representation of neural networks,
a nonparametric probability distribution of hidden parameters is introduced.
In this proposal, hidden parameters are initialized by samples drawn from this distribution,
and output parameters are fitted by ordinary linear regression.
Numerical experiments show that backpropagation with proposed initialization converges faster
than uniformly random initialization.
Also it is shown that the proposed method achieves enough accuracy by itself without backpropagation in some cases.
\end{abstract}

\section{Introduction}
In the backpropagation learning of a neural network,
the initial weight parameters are crucial to its final estimates.
Since hidden parameters are put inside nonlinear activation functions,
simultaneous learning of all parameters by backpropagation is accompanied by a non-convex optimization problem.
When the machine starts from an initial point far from the goal,
the learning curve easily gets stuck in local minima or lost in plateaus,
and the machine fails to provide good performance.

Recently deep learning schemes draw tremendous attention for their overwhelming high performances for real world problems\cite{google.cat, dl.speech}.
Deep learning schemes consist of two stages: {\it pre-training} and {\it fine-tuning}.
The pre-training stage plays an important role for the convergence of the following fine-tuning stage.
In pre-training, the weight parameters are constructed layer by layer,
by stacking unsupervised learning machines such as restricted Boltzmann machines\cite{hinton} or denoising autoencoders\cite{bengio}.
Despite the brilliant progress in application fields,
theoretical interpretation of the schemes is still an open question\cite{bengio.replearn}.

In this paper we introduce a new initialization/pre-training scheme which could avoid the non-convex optimization problem.
The key concept is the probability distribution of weight parameters derived from Murata's integral representation of neural networks\cite{murata}.
The distribution gives an intuitive idea what the parameters represent and contains information about where efficient parameters exist.
Sampling from this distribution, we can initialize weight parameters more efficiently than just sampling from a uniform distribution.
In fact, for relatively simple or low dimensional problems, our method by itself attains a high accuracy solution without backpropagation.

De Freitas et al.\cite{defreitas} also introduced a series of stochastic learning methods for neural networks based on the Sequential Monte Carlo (SMC).
In their methods the learning process is iterative and initial parameters are given by less informative distributions such as normal distributions.
On the other hand we could draw the parameters from a {\it data dependent} distribution.
Furthermore, in SMC, the number of hidden units must be determined before the learning,
while it is determined naturally in our method.

\section{Back ground and related works}
One of the most naive initialization heuristics is to draw samples uniformly from an interval $[-\alpha, \alpha]$.
Nguyen and Widrow\cite{nguyen.widrow} gave two fundamental points of view.
First, since a typical activation function such as sigmoid and hyperbolic tangent is approximated as a linear function at its inflection point,
one should initialize the hidden parameters in such a way that the inputs for each hidden unit are in the {\it linear region}.
Second, since each hidden unit determines the slice of the {\it Fourier transformed} input space,
that is, each individual hidden unit responds {\it selectively} to only the inputs whose spatial frequency is in a particular band,
 one should initialize hidden parameters in such a way that the corresponding frequency bands cover the possible input frequencies.

LeCun et al.\cite{efficient.bp} also emphasized the need to preset parameters in the linear region
because parameters outside the linear region have small gradients and stray into more difficult nonlinear regions.
They focused on the curvature of input vectors and proposed to use $\alpha \propto m^{-1/2}$, where $m$ is the fan-in, or the dimensionality of input vectors.
Shimodaira\cite{shimodaira} proposed to initialize parameters such that
corresponding {\it activation} regions to cover whole the possible inputs.
Linear algebraic techniques are also employed.
For example, Shepanski\cite{shepanski} used the pseudo inverse to determine the parameters of linear approximated neural networks,
and Yam and Chow\cite{yam.chow} used the QR decomposition.

Integral transform viewpoints originated from more theoretical backgrounds than linear region viewpoints:
the theoretical evaluation of the approximation power of neural networks.
In the earliest stage, purely functional analysis methods were employed.
In 1957 Kolmogorov\cite{kolmogorov} showed that any multivariate continuous functions can be exactly
represented by sums of compositions of {\it different} continuous functions of only one variable.
Inspired by the Kolmogorov's theorem, Hecht-Nielsen\cite{hechtnielsen} and K\r{u}rkov\'{a}\cite{kurkova}
applied the idea to neural networks, which are sums of compositions of the {\it same} sigmoid function.
Sprecher\cite{sprecher} gave more constructive version of the proof and
later implemented the improved proof as a learning algorithm of neural networks\cite{sprecher1}.

In 1989 the universal approximation property of {\it single} layer neural networks
has been investigated 
and the integral transform aspects emerged.
Carroll and Dickinson\cite{carroll.dickinson} introduced the Radon transform 
and Funahashi\cite{funahashi} used the Fourier analysis and the Paley-Weiner theory,
whereas Cybenko\cite{cybenko} employed the Hahn-Banach and Riesz Representation theorems.
In the following years, upper bounds of the approximation error were investigated\cite{jones,barron,murata}.
Barron\cite{barron} refined the Jones' result\cite{jones} using the weighted Fourier transform.
K\r{u}rkov\'{a}\cite{kurkova} later developed the general theory of integral transforms.
Inspired by the Barron's result, Murata\cite{murata} introduced a family of integral transforms defined by
{\it ridge functions}, which are regarded as a hybrid of the Radon and wavelet transforms.
Cand\'{e}s\cite{candes} inherited Murata's transforms and developed ridgelets,
which was the beginning of the series of multiscale ``-lets'' analysis\cite{curvelets.ridgelets}.

Those multiscale viewpoints also inherits the {\it selective} activation properties of neural networks.
Denoeux and Lengell\'{e}\cite{denoeux} proposed to collect $K$ {\it prototype} vectors as initial hidden parameters.
Each prototype $p_k$ is drawn from its corresponding cluster $C_k$, where the clusters $\{C_k\}_{k=1}^K$ are formed in a stereographically projected input space.
In this manner each prototype $p_k$ comes to selectively respond to the input vectors $x$ which belongs to the cluster $C_k$.

This study is based on the integral transform viewpoint, and proposes a new way for practical implementation.
Although integral transforms have been well studied as theoretical integral representations of neural networks,
practical implementations for training have been merely done.
However integral representations have big advantage over linear region viewpoints
in that they can give global directions how each neural units should behave,
while the latter only give local directions.

\section{Nonparametric weight initialization via integral transform}
\subsection{Sampling based two-stage learning}
Let $g:\mathbb{R}^m \to \mathbb{R}$ be a neural network with a single hidden layer expressed as
\begin{equation}
g(\mathbf{x}) = \sum_{j=1}^J w_j \phi \left( \mathbf{a}_j \cdot \mathbf{x} - b_j \right) + w_0,
\end{equation}
where the map $\phi$ is called the {\it activation function};
$\ba_j$ and $b_j$ are called {\it hidden parameters}, and $w_j$ are {\it output parameters}.
With an ordinary {\it sigmoid function} $\sigma(z) := \frac{1}{1+\exp(-z)}$,
the activation function $\phi$ is supposed to be the {\it sigmoid pair} in the form
\begin{equation}\label{eq:sigpair}
\phi(z) := \frac{1}{H} \left\{ \sigma(z+h) - \sigma(z-h)\right\}, \quad (h>0),
\end{equation}
where $H:=\sigma(h)-\sigma(-h)$ normalizes the maximum value of $\phi$ to be one.

We consider an {\it oracle} distribution $p(\ba,b)$ of hidden parameters.
If such a distribution exists, we can sample and fix these hidden parameters according to $p(\ba,b)$ first,
and then we could fit the rest output parameters by ordinary linear regression.
We call this two-stage framework as {\it Sampling Regression (SR) learning}.

The candidates of $p(\ba,b)$ could be some parametric distributions such as normal distributions or uniform distributions.
In the following sections we derive a data dependent distribution from an integral representation of neural networks.

\subsection{Integral representations of neural networks}
Consider approximating a map $f:\mathbb{R}^m \to \mathbb{R}$ with a neural network.
Murata\cite{murata} defined an integral transform $T$ of $f$ with respect to a {\it decomposing kernel} $\phi_d$ as
\begin{equation}
T(\mathbf{a},b) := \frac{1}{C} \int_{\mathbb{R}^m} \phi_d( \ba \cdot \bx - b )f(\bx) d\bx,
\end{equation}
where $C$ is a normalizing constant.
Murata also showed that given the decomposing kernel $\phi_d$,
there exists the associating {\it composing kernel} $\phi_c$
such that for any $f \in L^1(\mathbb{R}^m) \cap L^p(\mathbb{R}^m) (1 \leq p \leq \infty)$, 
the inversion formula
\begin{equation}
f(\mathbf{x}) = \lim_{\epsilon \to 0} \int_{\mathbb{R}^{m+1}} \phi_c^*( \ba \cdot \bx - b ) T(\ba,b)e^{-\epsilon |\ba|^2}d\ba db \quad \mbox{in } L^p,
\end{equation}
holds (Th.1 in \cite{murata}) where $\cdot^*$ denotes the complex conjugate. The convergence factor $e^{-\epsilon |\ba|^2}$ is omitted when $T \in L^1(\mathbb{R}^{m+1})$,
which is attained when $f$ is compactly supported and $C^{m,\alpha}$-H\"{o}lder continuous with $0 < \alpha \leq 1$ (Th.3 in \cite{murata}),
or compactly supported and bounded $C^{m+1}$-smooth (Cor.2 in \cite{murata}).

In particular one can set a composing kernel $\phi_c$ as a sigmoid pair $\phi$ given in \refeq{eq:sigpair}
and the associating decomposing kernel as:
\begin{equation}\label{eq:drho}
\phi_d(z) = 
  \begin{cases}
    \rho^{(m)}(z) & \mbox{if $m$ is even} \\
    \rho^{(m+1)}(z) & \mbox{otherwise}
  \end{cases},
\end{equation}
where $\rho$ is a nonnegative $C^\infty$-smooth function whose support is in the interval $[-1,1]$.
Such a $\rho$ does exist and is known as a {\it mollifier}\cite{mollifier}.
The {\it standard mollifier} $\rho(z)=\exp \left( \frac{1}{z^2-1} \right)$ is a well-known example.

Hereafter we assume $\phi_c$ is a sigmoid pair and $\phi_d$ is the corresponding derivative of the standard mollifier.
We also assume that our target $f$ is a bounded and compactly supported $C^{(m+1)}$-smooth function.
Then the integral transform $T$ of $f$ is absolutely integrable
and the inversion formula is reduced to the direct form $f(\bx) = \int_{\mathbb{R}^{m+1}} \phi_c^*(\ba \cdot \bx - b) T(\ba,b)d\ba db$.

Let $\tau(\ba, b)$ be a probability distribution function over $\mathbb{R}^{m+1}$ which is proportional to $|T(\ba,b)|$,
and $c(\ba,b)$ be satisfying $c(\ba,b)\tau(\ba,b) = T(\ba,b)$ for all $(\ba,b) \in \mathbb{R}^{m+1}$.
With this notations, the inversion formula is rewritten as the expectation form with respect to $\tau(\ba,b)$, that is,
\begin{equation}
f(\bx) = \int_{\mathbb{R}^{m+1}} c(\ba,b) \phi_c(\ba \cdot \bx - b) \tau(\ba,b) d\ba db.
\end{equation}
The expression implies the finite sum
\begin{equation}
g_J(\bx) := \frac{1}{J} \sum_{j=1}^J c( \ba_j, b_j ) \phi_c( \ba_j \cdot \bx - b_j ), \quad (\ba_j, b_j) \iid \tau(\ba,b)
\end{equation}
converges to $f$ in mean square as $J \to \infty$, i.e. $\mathbb{E}[ g_J ] = f$ and $\mathrm{Var}[g_J] < \infty$ holds for any $J$ (Th.2 in \cite{murata}).
Here $g_J$ is a neural network with $2J$ hidden units,
therefore we can regard the inversion formula as an {\it integral representation} of neural networks.

\subsection{Practical calculation of the integral transform}
Now we attempt to make use of the integral transform $|T(\ba, b)|$ as an oracle distribution $p(\ba,b)$ of hidden parameters.
Although the distribution is given in the explicit form as we saw in the preceding section,
further refinements are required for practical calculation.

Given a set $\{ (\bx_n, y_n) \}_{n=1}^N \subset \mathbb{R}^m \times \mathbb{R}$ of input and output pairs,
$T(\ba,b)$ is empirically approximated as
\begin{equation} \label{eq:Tab.approx}
T(\ba,b) \approx \frac{1}{Z} \sum_{n=1}^N \phi_d( \ba \cdot \bx_n - b ) y_n,
\end{equation}
with some constant $Z>0$ which is hard to calculate exactly. 
In fact sampling algorithms such as the acceptance-rejection method\cite{prml} and Markov chain Monte Carlo method\cite{prml}
work with any unnormarized distribution because they only evaluate the ratio between probability values.
Note that the approximation converges to the exact $T(\ba,b)$ in probability by the law of large numbers
 {\it only} when the input vectors are i.i.d. samples from a uniform distribution.

As a decomposing kernel $\phi_d$ we make use of the $k$-th order derivative 
of the standard mollifier $\rho(z)=\exp{\frac{1}{z^2-1}}$ where $k=m$ if $m$ is even and $k=m+1$ otherwise.
The $k$-th derivative $\rho^{(k)}(z)$ of the mollifier takes the form
\begin{equation} \label{eq:deriv.mol}
\rho^{(k)}(z) = \frac{P_k(z)}{(z^2-1)^{2k}}\rho(z) \quad (k=0,1,2,\cdots),
\end{equation}
where $P_k(z)$ denotes a polynomial of $z$ which is calculated by the following recurrence formula:
\begin{eqnarray}
P_0(z) &\equiv& 1 \mbox{ (const.)}, \label{eq:P0} \\
P_{k+1}(z) &=&  P^\prime_k(z) (z^4-2z^2+1) + P_k(z) \left \{ -4kz^3 + 2(2k-1)z\right \} \label{eq:Pk}.
\end{eqnarray}
The higher order derivatives of a mollifier
has more rapid oscillations in the neighbourhoods of both edges of its support.

Given a data set $\mathcal{D}:=\{ (\bx_n,y_n)\}_{n=1}^N \subset \mathbb{R}^m \times \mathbb{R}$,
our {\bf Sampling Regression} method is summarized as below:
\begin{enumerate}
\item[0.] \underline{Preliminary stage}:
Calculate $\rho^{(k)}(z)$
according to \refeq{eq:deriv.mol}, \refeq{eq:P0} and \refeq{eq:Pk},
where $k=m$ if $m$ is even and $k=m+1$ otherwise.
Then $T(\ba,b)$ is calculated by \refeq{eq:Tab.approx} with setting $\phi_d=\rho^{(k)}$.
As we noted above, one can choose arbitrary $Z>0$.
\item \underline{Sampling stage}: Draw $J$ samples $\{(\ba_j,b_j)\}_{j=1}^J$ from 
the probability distribution $\tau(\ba,b) \propto |T(\ba,b)|$
by acceptance-rejection method,
where $J$ denotes the number of hidden (sigmoid pair) units.
Then we obtain the hidden parameters $\{ (\ba_j, b_j)\}_{j=1}^J$.
\item \underline{Regression stage}: Let $\phi_{jn} := \phi_d(\ba_j \cdot \bx_n - b_j)$ for all $j=1,\cdots,J$ and $n=1,\cdots,N$.
Solve the system of linear equations  $y_n = \sum_{j=1}^J w_j \phi_{jn} + w_0 \ (n=1\cdots N)$ with respect to $\{ w_j \}_{j=0}^J$.
Then we obtain the output parameters $\{ w_j \}_{j=0}^J$.
\end{enumerate}

\subsection{For more efficient sampling}
Generally $|T(\ba,b)|$ is ill-shaped and sampling from the distribution is difficult.
For example in \reffig{fig:Tab.of.sin} Left, samples drawn from $|T(\ba,b)|$ of $f(x)=\sin 2 \pi x$ with $x \in [-1,1]$ is plotted.
Whereas in \reffig{fig:Tab.of.sin} Right, the same distribution is transformed to another $(\boldsymbol{\alpha},\beta)$-coordinate system (which is explained below). The support of the distribution is reshaped into a rectangular, which implies sampling from $|T(\boldsymbol{\alpha},\beta)|$ is easier than doing from $|T(\ba,b)|$.
\begin{figure}[h]
  \begin{center}
    \begin{tabular}{c}
      \begin{minipage}{0.5\hsize}
        \begin{center}
          \includegraphics[width=6cm]{./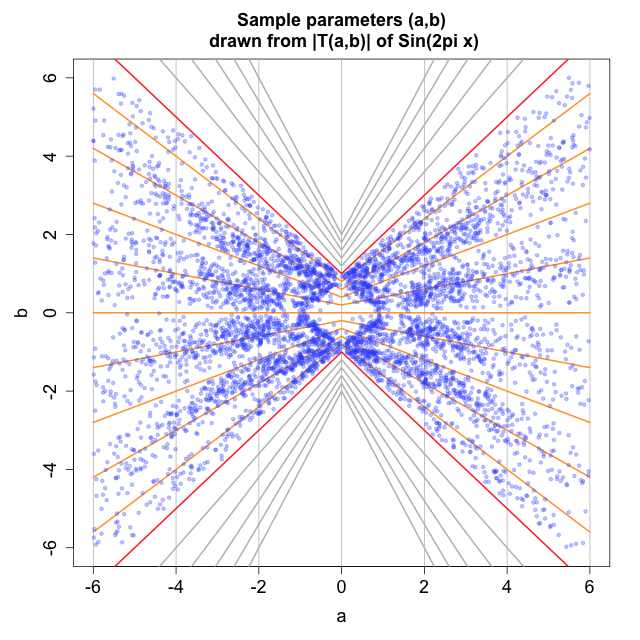}
        \end{center}
      \end{minipage}
      \begin{minipage}{0.5\hsize}
        \begin{center}
          \includegraphics[width=6cm]{./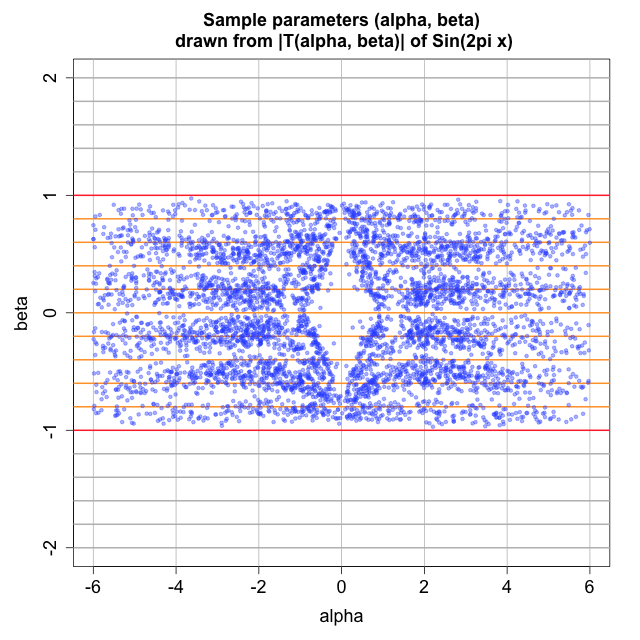}
        \end{center}
      \end{minipage}

    \end{tabular}
  \end{center}
  \caption{Sample parameters drawn from $|T(\ba,b)|$ of $\sin 2 \pi x$ case.
    Red lines indicate the theoretical boundary $b= \pm ( M \|\ba\| + 1 )$ of the support of $|T(\ba,b)|$.
    {\bf Left}: $|T(\ba,b)|$ has a non-convex support, in which case sampling is inefficient.
    {\bf Right}: The same sample points are plotted in the coordinate transformed $(\boldsymbol{\alpha},\beta)$-space.
    Coordinate lines are deformed to lattice lines, 
    and $|T(\boldsymbol{\alpha},\beta)|$ has a rectangular support.}
  \label{fig:Tab.of.sin}
\end{figure}

This ill-shapeness is formulated as following proposition.
\begin{prop} \label{prop:support}
Suppose the objective function $f(\bx)$ has a compact support,
then the support of its transform $T(\ba,b)$ is in the region 
$\Omega :=\{ (\ba,b) \,|\, |b| \leq M \| \ba \| + 1 \}$
 with $M := \max_{\bx \in \supp f} \|\bx\| $.
\end{prop}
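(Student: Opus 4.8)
The plan is to prove the contrapositive statement: if $(\ba, b) \notin \Omega$, then $T(\ba, b) = 0$. The essential ingredient is that the decomposing kernel $\phi_d$ inherits the compact support of the mollifier $\rho$. Since $\supp \rho \subseteq [-1,1]$, the function $\rho$ is identically zero on the open set $\{|z| > 1\}$, so every derivative $\rho^{(k)}$ vanishes there as well; hence $\supp \phi_d \subseteq [-1,1]$. Consequently the integrand $\phi_d(\ba \cdot \bx - b) f(\bx)$ can be nonzero only where simultaneously $\bx \in \supp f$ and $|\ba \cdot \bx - b| \leq 1$.

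First I would fix an arbitrary $(\ba, b)$ with $|b| > M\|\ba\| + 1$ and estimate $|\ba \cdot \bx - b|$ for any $\bx \in \supp f$. By the definition of $M$ we have $\|\bx\| \leq M$, so the Cauchy--Schwarz inequality gives $|\ba \cdot \bx| \leq \|\ba\|\,\|\bx\| \leq M\|\ba\|$. The reverse triangle inequality then yields
\begin{equation}
|\ba \cdot \bx - b| \geq |b| - |\ba \cdot \bx| \geq |b| - M\|\ba\| > 1.
\end{equation}

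Therefore $\ba \cdot \bx - b$ lies outside $[-1,1]$, forcing $\phi_d(\ba \cdot \bx - b) = 0$. Since this holds for every $\bx \in \supp f$, while $f(\bx) = 0$ for $\bx \notin \supp f$, the integrand vanishes identically over all of $\mathbb{R}^m$, and hence $T(\ba, b) = 0$. This establishes $\supp T \subseteq \Omega$, which is exactly the claim.

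I do not expect any serious obstacle here: the result follows directly from the compact supports of $f$ and $\phi_d$ together with Cauchy--Schwarz. The only point deserving a moment's care is the preservation of the support bound under differentiation of the mollifier, but this is immediate because a function vanishing on an open set has all of its derivatives vanishing there as well.
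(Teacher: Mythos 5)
Your proof is correct and follows essentially the same route as the paper's: both arguments rest on $\supp \phi_d \subseteq [-1,1]$ and the bound $|\ba \cdot \bx| \leq M\|\ba\|$ on $\supp f$ to show the integrand vanishes identically whenever $(\ba,b) \notin \Omega$. Your direct contrapositive phrasing with explicit Cauchy--Schwarz and reverse triangle inequality is if anything a slightly cleaner write-up of the same idea.
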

\begin{proof}
Recall the support of $\phi_d$ is included in the interval $[-1,1]$,
therefore for any $\ba, b$ and $\bx$, $\phi_d( \ba \cdot \bx - b) \neq 0$ implies $| \ba \cdot \bx - b | < 1$.
%
The latter condition is equivalently deformed to $\ba \cdot \bx - 1 < b < \ba \cdot \bx +1$,
which implies $|b| < |\ba \cdot \bx|+1$.
By the compact support assumption of $f$,
taking the maximum with respect to $\bx$ leads to
$|b| < M \| \ba \|+1$.
By tracking back the inferences, 
for any $\ba,b$ and $\bx \in \supp f$, 
\begin{equation} \label{eq:prop1.4}
(\ba,b) \notin \Omega \Rightarrow \phi_d( \ba \cdot \bx - b ) = 0.
\end{equation}
Since for any $\bx \notin \supp f$, the integrand of $T(\ba,b)$ is always zero,
the integration domain of $T(\ba,b)$ can be restricted into $\supp f$.
Therefore by \refeq{eq:prop1.4}, 
\begin{equation}
T(\ba,b) \neq 0 \Rightarrow (\ba,b) \in \Omega
\end{equation}
holds, which comes to the conclusion:
$\supp T \subset \Omega$.
\end{proof}

In a relatively high dimensional input case,
sampling in the coordinate transformed $(\boldsymbol{\alpha},\beta)$-space
\begin{equation} \label{eq:ct}
\begin{pmatrix}
\ba \\ b
\end{pmatrix}
=
\begin{pmatrix}
\boldsymbol{\alpha} \\ (M \| \boldsymbol{\alpha} \| + 1)\beta
\end{pmatrix},
\end{equation}
is more efficient than sampling in the $(\ba,b)$-space
because the shape of the support of $|T(\ba,b)|$ in the $(\boldsymbol{\alpha},\beta)$-space
is rectangular (see, \reffig{fig:Tab.of.sin})
and therefore the proposal distribution is expected to reduce miss proposals, out of the support.

In case that the coordinate transform technique is not enough, 
it is worth sampling from each {\it component} distribution.
Namely, the empirically approximated $|T(\ba,b)|$ is bounded above by a {\it mixture distribution}:
\begin{eqnarray}\label{eq:mix}
|T(\ba,b)| \quad \approx \quad \frac{1}{Z} \Big| \sum_{n=1}^N y_n \phi_d( \ba \cdot \bx_n - b )  \Big| 
&\leq& \frac{1}{Z} \sum_{n=1}^N |y_n| |\phi_d( \ba \cdot \bx_n - b )|, \nonumber \\
&\propto& \sum_{n=1}^N \eta_n p_n( \ba, b ),
\end{eqnarray}
where $p_n( \ba,b ) \propto |\phi_d( \ba \cdot \bx_n - b )|$ is a {\it component distribution}
and $\eta_n \propto |y_n|$ is a {\it mixing probabilities}.

In addition, an upper bound of $\phi_d$ is given by the form
\begin{equation} \label{eq:log.bound}
\log | \phi_d(z) | \leq A z^2 + B,
\end{equation}
for some $A>0$ and $B$. 

\section{Experimental results}
We conducted three sets of experiments comparing three types of learning methods:
\begin{itemize}
\item[{\it BP}] Whole parameters are initialized by samples from a uniform distribution, and trained by \underline{B}ack\underline{P}ropagation.
\item[{\it SBP}] Hidden parameters are initialized by \underline{S}ampling from $|T(\ba,b)|$;
and the rest output parameters are initialized by samples from a uniform distribution.
Then whole parameters are trained by \underline{B}ack\underline{P}ropagation.
\item[{\it SR}] Hidden parameters are determined by \underline{S}ampling from $|T(\ba,b)|$; the rest output parameters are fitted by linear \underline{R}egression.
\end{itemize}

In order to compare the ability of the three methods, we conducted three experiments on three different problems:
One-dimensional complicated curve regression, Multidimensional Boolean functions approximation
and Real world data classification.

\subsection{One-dimensional complicated curve regression - Topologist's sine curve $\sin 2 \pi / x$}
\begin{figure}[h]
  \begin{center}
    \begin{tabular}{c}
      \begin{minipage}{0.5\hsize}
        \begin{center}
          \includegraphics[width=6cm]{./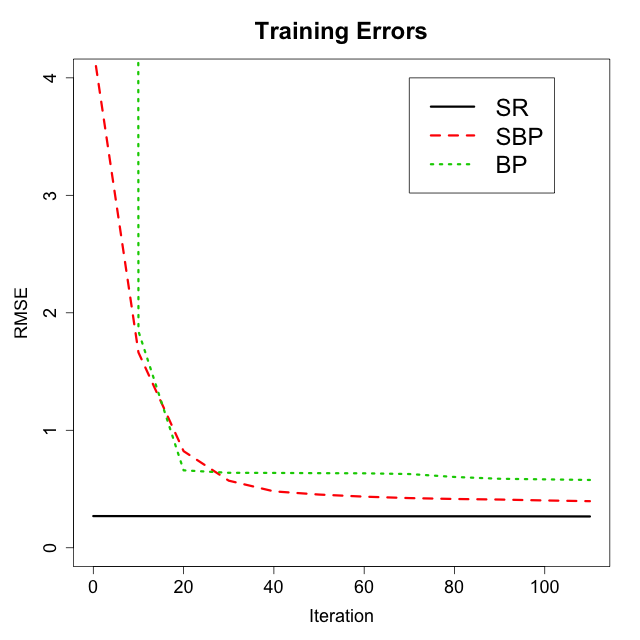}
        \end{center}
      \end{minipage}
      \begin{minipage}{0.5\hsize}
        \begin{center}
          \includegraphics[width=6cm]{./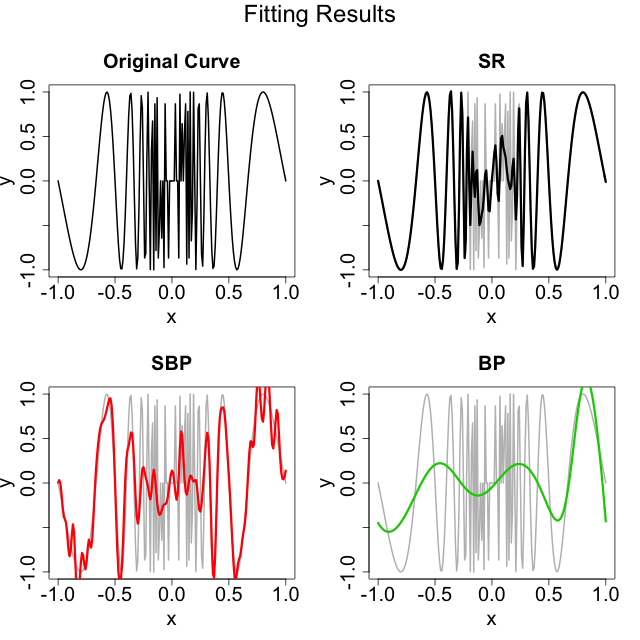}
        \end{center}
      \end{minipage}

    \end{tabular}
  \end{center}
  \caption{Training results of three methods for fitting the topologist's sine curve $\sin 2 \pi / x$.
    {\bf Left}: SR (solid black line) by itself achieved the highest accuracy without the iterative learning,
            whereas SBP (dashed red line) converged to lower RMSE than BP (dotted green line).
    {\bf Right}: The original curve (upper left) has high frequencies around the origin.
    SR (upper right) followed such a dynamic variation of frequency better than other two methods.
    SBP (lower left) roughly approximated the curve with noise.
    BP (lower right) only fitted moderate part of the curve.
    }
  \label{fig:TSC}
\end{figure}

First we performed one-dimensional curve regression.
The objective function is a two-sided {\it topologists's sine curve (TSC)} $f(x) := \sin 2 \pi / x$
defined on the interval $[-1,1]$ whose indefiniteness at zero is removed by defining $f(0) = 0$.
The TSC is such a complicated curve whose spatial frequency gets arbitrary high as $x$ tends to zero.
For training, $201$ points were sampled from the domain $[-1,1]$ in equidistant manner.
The number of hidden parameters were fixed to $100$ in each model.
Note that relatively redundant quantity of parameters are needed for
our sampling initialization scheme to obtain good parameters.
The output function was set to linear and the batch learning was performed by BFGS quasi-Newton method.
Uniformly random initialization parameters for BP and SBP were drawn from the interval $[-1,1]$.
Sampling from $|T(\ba,b)|$ was performed by acceptance-rejection method.

In \reffig{fig:TSC} Left, the Root Mean Squared Error (RMSE) in training phase of three methods are shown.
The solid black line corresponds to the result by SR, which by itself achieved the highest accuracy without iterative learnings.
The dashed red line corresponds to the result by SBP, and it converged to lower RMSE than that of BP depicted in the dotted green line.
In \reffig{fig:TSC} Right, fitting results of the three methods are shown.
As we noted the original curve (upper left) has numerical instability around the origin,
therefore it is difficult to fit the curve.
SR (upper right) approximated the original curve well except around the origin,
while other two methods, SBP (lower left) and BP (lower right) could just partly fit the original curve.
In this experiment, we examined the flexibility of our method by fitting a complicated curve.
The experimental result supports that the oracle distribution gave advantageous directions.

\subsection{Multidimensional Boolean functions approximation - Combined AND, OR and XOR}
\begin{wrapfigure}[24]{r}[1pt]{6cm}
  \vspace{-2\baselineskip}
  \begin{center}
          \includegraphics[width=6cm]{./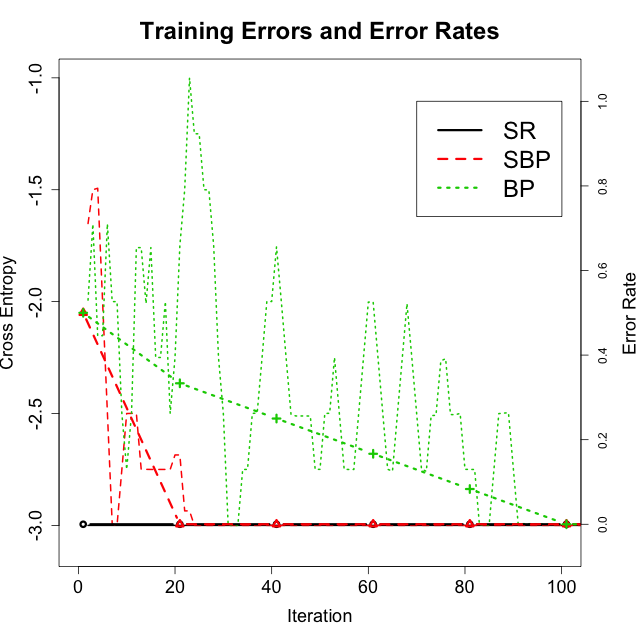}
	\captionof{figure}{Cross-entropy curves (thin lines) and classification error rates (thick lines).
      SR (solid black line) achieved the perfectly correct answer from the beginning.
      SBP (dashed red line) also attained the perfect solution faster than BP.
      BP (dotted green line) costed $100$ iterations of learning to give the correct answer.
       }
       \label{fig:logic}
  \end{center}
\end{wrapfigure}
Second we performed a binary problem with two-dimensional input and three-dimensional output.
Output vectors are composed of three logical functions: $F(x,y) := (x {\rm AND} y, x {\rm OR} y, x {\rm XOR} y)$.
Therefore the total number of data is just four: $(x,y) \in \{ (0,0), (0,1),(1,0),(1,1)\}$.
The number of hidden units were fixed to $10$.
The output function was set to sigmoid and the loss function was set to cross-entropy.
Uniformly random initialization parameters for BP and SBP were drawn from the interval $[-1,1]$.
Sampling from $|T(\ba,b)|$ was performed by acceptance-rejection method.

In \reffig{fig:logic} 
both the cross-entropy curves and classification error rates are depicted in thin and thick lines respectively.
The solid black line corresponds to the results by SR, which achieved the perfectly correct answer from the beginning.
The dashed red line corresponds to the results by SBP, which also attained the perfect solution faster than BP.
The dotted green line corresponds to the results by BP, which cost $100$ iterations of learning to give the correct answer.
In this experiment we have validated that the proposed method works well with multiclass classification problems.
The quick convergence of SBP indicates that $|T(\ba,b)|$ contains advantageous information
on the training examples to the uniform distribution.

\subsection{MNIST}
\begin{wrapfigure}[22]{r}[1pt]{6cm}
  \vspace{-4\baselineskip}
  \begin{center}
   \includegraphics[width=6cm]{./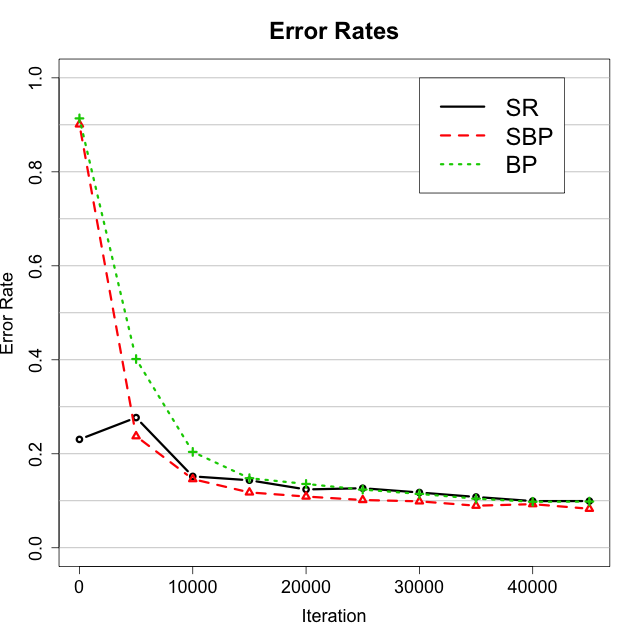}
  \end{center}
  \caption{Test classification error rates for MNIST dataset.
SR (black real line) marked $23.0\%$ at the beginning and finished $9.94\%$,
the error reascent suggests that SR may have overfitted.
SBP (red dashed line) reduced the fastest and finished $8.30\%$.
BP (green dotted line) declined the slowest and finished $8.77\%$.
}
  \label{fig:mnist}
\end{wrapfigure}

Finally we examined a real classification problem using the MNIST data set\cite{mnist}.
The data set consists of $60,000$ training examples and $10,000$ test examples.
Each input vector is a $256$-level gray-scaled $(28 \times 28 =)784$-pixel image of a handwritten digit.
The corresponding label is one of 10 digits.
We implemented these labels as $10$-dimensional binary vectors whose components are chosen randomly with equivalent probability for one and zero.
We used randomly sampled $15,000$ training examples for training and whole $10,000$ testing examples for testing.
The number of hidden units were fixed to $300$, which is the same size as used in the previous study of LeCun et al.\cite{lecun}.
Note that $J$ sigmoid pairs corresponds to $2J$ sigmoid units,
therefore we used $150$ sigmoid pairs for SR and SBP, and $300$ sigmoid units for BP.
The output function was set to sigmoid and the loss function was set to cross-entropy.
In obedience to LeCun et al.\cite{efficient.bp}, input vectors were normalized and 
randomly initialized parameters for BP and SBP were drawn from uniform distribution with mean zero and standard deviation $784^{-1/2} \approx 0.0357$.

Direct sampling from $|T(\ba,b)|$ is numerically difficult because
the differential order of its decomposing kernel $\phi_d$ piles up as high as $784$-th order.
We abandoned rigorous sampling and tried sampling from a {\it mixture annealed} distribution.
As described in \refeq{eq:mix}, we regarded $|T(\ba,b)|$ as a mixture of $|\phi_d(\ba \cdot \bx_n - b)|$.
By making use of the log boundary given by \refeq{eq:log.bound},
we numerically approximated $\phi_d(z)$ from above
\begin{equation}
\log |\phi_d(z)| \leq 2800 z^2 - 800, \quad (|z| < 1),
\end{equation}
and drew samples from an {\it easier} component distribution $p_n(\ba,b) \propto \exp\{2800 (\ba \cdot \bx_n - b)^2 - 800 \}$.
Details of the sampling technique is explained in \ref{supp:mix}.
The sampling procedure scales linearly with the dimensionality of the input space ($784$)
and the number of required hidden units ($150$) respectively.
In particular it scales constantly with the number of the training examples.

The following linear regression was conducted by singular value demcomposition (SVD),
which generally costs $O(m n^2)$ operations, assuming $m \geq n$, for decomposing a $m \times n$-matrix.
In our case $m$ corresponds to the number of the training examples ($15,000$)
and $n$ corresponds to the number of hidden units ($300$).
At last backpropagation learning was performed by stochastic gradient descent (SGD)
with adaptive learning rates and diagonal approximated Hessian\cite{bottou98}.
The experiment was performed in R\cite{statR} on a Xeon X5660 $2.8$GHz with $50$GB memory.

In \reffig{fig:mnist} the classification error rates for test examples are depicted.
The black real line corresponds to the results by SR,
which marked the lowest error rate ($23.0\%$) of the three at the beginning,
and finished $9.94\%$ after $45,000$ iterations of SGD training.
The training process was not monotonically decreasing in the early stage of training,
it appears that the SR initialization overfitted to some extent.
The red dashed line corresponds to the results by SBP,
which marked the steepest error reduction in the first $5,000$ iterations of SGD training
and finished $8.30\%$.
The green dotted line corresponds to the results by BP,
which declined the slowest in the early stage of training
and finished $8.77\%$.

In \reftab{tab:time} the training time from initialization through SGD training is listed.
The sampling step in SR ran faster than the following regression and SGD steps.
In addition, the sampling time of SR and SBP was as fast as the sampling time of BP.
As we expected, the regression step in SR, which scales linearly with the amount of the data, cost much more time than the sampling step did.
The SGD step also cost, however each step cost around merely $0.05$ seconds,
and it would be shorten if the initial parameters had better accuracy.

\begin{table}[htb] 
  \begin{center}
    \caption{Training Times for MNIST}
    \begin{tabular}{l||c|c|c}  
      \hline
      Method & Sampling [s] & Regression [s] & BP by SGD ($45,000$ itrs.) [s] \\ \hline
      SR & $1.15 \times 10^{-2}$ & $2.60$ & $2.00 \times 10^3$ \\
      SBP & $1.14 \times 10^{-2}$ & - & $2.31 \times 10^3$ \\
      BP & $1.15 \times 10^{-2}$ & - & $2.67 \times 10^3$ \\ \hline
    \end{tabular}
    \label{tab:time}
  \end{center} 
\end{table}

In this experiment, we confirmed that the proposed method still works for real world data
with the aid of an annealed sampling technique.
Although SR showed an overfitting aspects,
the fastest convergence of SBP supports that 
the oracle distribution gave meaningful parameters,
and the annealed sampling technique could draw meaningful samples.
Hence the overfitting of SR possibly comes from regression step,
which suggests the necessity for further blushing up of regression technique.
In addition, our further experiments also indicated
that when the number of hidden units increased to $6,000$,
the {\it initial} test error rate scored $3.66\%$,
which is smaller than the previously reported error rates
$4.7\%$ by LeCun et al.\cite{lecun} with $300$ hidden units.

\section{Conclusion and future directions}
In this paper, we introduced a two-stage weight initialization method for backpropagation:
sampling hidden parameters from the oracle distribution
and fitting output parameters by ordinary linear regression.
Based on the integral representation of neural networks,
we constructed our oracle distributions from given data in a nonparametric way.
Since the shapes of those distributions are not simple in high dimensional input cases,
we also discussed some numerical techniques such as the coordinate transform 
and the mixture approximation of the oracle distributions.
We performed three numerical experiments:
complicated curve regression, Boolean function approximation, and handwritten digit classification.
Those experiments show that our initialization method works well with backpropagation.
In particular for the low dimensional problems, 
well-sampled parameters by themselves achieve good accuracy without any parameter updates by backpropagation.
For the handwritten digit classification problem,
the proposed method works better than random initialization.

Sampling learning methods inevitably come with redundant hidden units
since drawing good samples usually requires a large quantity of trial.
Therefore the model shrinking algorithms such as pruning, sparse regression,
 dimension reduction and feature selection are naturally compatible to the proposed method.

Although plenty of integral transforms have been used for theoretical analysis of neural networks,
numerical implementations, in particular sampling approaches are merely done.
Even theoretical calculations often lack practical applicability,
for example a higher order of derivative in our case,
each integral representation interprets different aspects of neural networks.
Further Monte Carlo discretization of other integral representations is an important future work.

In the deep learning context, it is said that the deep structure remedies
the difficulty of a problem by multilayered superpositions of simple information transformations.
We conjecture that the complexity of high dimensional oracle distributions can be
decomposed into relatively simple distributions in each layer of the deep structure.
Therefore, extending our method to the multilayered structure is our important future work.

\subsubsection*{Acknowledgments}
The authors are grateful to Hideitsu Hino for his incisive comments on the paper.
They also thank to Mitsuhiro Seki for having constructive discussions with them.

\renewcommand{\refname}{\normalfont\normalsize\bfseries References}
\small{
\bibliographystyle{unsrt}
\bibliography{reference_shortshort}
}

\appendix
\section*{Supplementary materials}
\section{Sampling recipes}
Sampling hidden parameter $(\ba,b)$'s from the oracle distribution $p(\ba,b)$ demands a little ingenuity.
In our experiments, we have implemented two sampling procedures:
a rigorous but naive, computationally inefficient way
and an approximative/ad hoc but quick and well-performing way.
Although both work quickly and accurately in a low dimensional input problem,
only the latter works in a high dimensional problem such as MNIST.

\subsection{Sampling from rigorous oracle distribution}
Given a decomposing kernel $\phi_d(z) :=\rho^{(m)}(z)$,
we employed acceptance-rejection (AR)  method directly on rigorous sampling from $p(\ba,b)$
On a proposal distribution $q(\ba,b)$, we employed uniform distribution.
We assume here that 
the support $\Omega$ of proposal distribution $q(\ba,b)$ has been adjusted to
cover the {\it mass} of $p(\ba,b)$ as tight as possible,
and the infimum $k := \inf p(\ba,b)/q(\ba,b)$ has been estimated.
Then our sampling procedure is conducted according to the following \refalg{alg:naive}.

\begin{algorithm}
  \caption{Rigorous sampling according to ordinary acceptance-rejection method.}\label{alg:naive}
   \begin{algorithmic}
     \REPEAT
     \STATE draw proposal point $(\ba^*,b^*) \sim q(\ba,b)$.
     \STATE draw uniformly random value $u$ from the interval $[0,1]$.
     \IF{ $u \leq \frac{p(\ba^*,b^*)}{k q(\ba^*, b^*)}$ }
     \RETURN $(\ba^*,b^*)$ \COMMENT{accept}
     \ELSE
     \STATE do nothing \COMMENT{reject}
     \ENDIF
     \UNTIL{acceptance occurs.}
   \end{algorithmic}
\end{algorithm}

Note that in a high dimensional case, the estimation accuracy of $k$ and the tightness of $\Omega$ affects the sampling efficiency and accuracy materially.
In fact, the expectation number of trial to obtain one sample by AR is $k$ times,
which gets exponentially large as the dimensionality increases.
Since the support of the oracle distribution $p(\ba,b)$ is not rectangular,
sampling from coordinate transformed $p(\boldsymbol{\alpha},\beta)$ remedies the difficulty.
In addition, the high order differentiation in the decomposing kernel $\phi_d$ cause numerical unstability.

\subsection{Sampling from mixture annealed distribution}\label{supp:mix}
In order to overcome the high dimensional sampling difficulty,
we approximately regarded $p(\ba,b)$ as a mixture distribution $p(\ba,b) \approx \sum_{n=1}^N \eta_n p_n(\ba,b)$
(as described in \refeq{eq:mix}) and conducted two-step sampling:
first choose one component distribution $p_n(\ba,b)$ according to the mixing probability $\eta_n \propto |y_n|$,
second draw a sample $(\ba,b)$ from chosen component distribution $p_n(\ba,b)$.

Sampling from $p_n(\ba,b) \propto |\phi_d(\ba \cdot \bx_n - b)|$ holds another difficulty
due to its high order differentiation in $\phi_d(z)$.
According to its upper bound evaluation (\refeq{eq:log.bound}),
a high order derivative $\rho^{(m)}(z)(=\phi_d(z))$ has its almost all {\it mass} around both edge
of its domain interval $[-1,1]$ and almost no mass in the middle of the domain (see \reffig{fig:Dmol} Left).
Hence we approximated, or {\it annealed}, $\rho^{(m)}(z)$
by a beta distribution,
which could model extreme skewness of $\rho^{(m)}(z)$ (e.g., $\rm{Beta}(z; 100,3)$; see \reffig{fig:Dmol} Right).
Then we conducted further steps of sampling:
first sample $z \in [-1,1]$ according to the annealing beta distribution, 
then sample $\ba$ and $b$ under the restriction $z = \ba \cdot \bx_n - b$.

\begin{figure}[h]
  \begin{center}
    \begin{tabular}{c}
      \begin{minipage}{0.5\hsize}
        \begin{center}
          \includegraphics[width=6cm]{./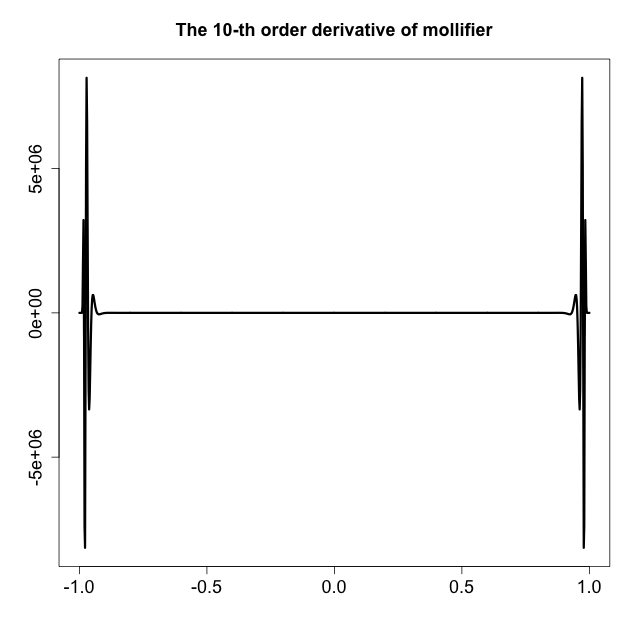}
        \end{center}
      \end{minipage}
      \begin{minipage}{0.5\hsize}
        \begin{center}
          \includegraphics[width=6cm]{./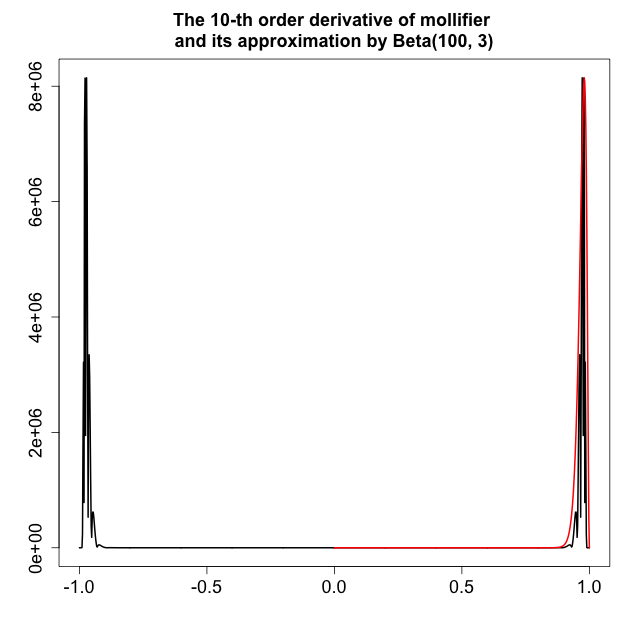}
        \end{center}
      \end{minipage}
    \end{tabular}
  \end{center}
  \caption{10-th order derivative $\rho^{(10)}(z)$ of mollifier.
    {\bf Left}: $\rho^{(10)}(z)$ has almost all mass, with high frequency, at both ends, and no mass in the middle of domain.
    {\bf Right}: The right half of $|\rho^{(10)}(z)|$ is approximated by beta distribution ${\rm Beta}(z; 100,3)$ (red line).
  \label{fig:Dmol}}
\end{figure}

Obviously the mixture approximation gives rise to poor restriction and virtual indefiniteness of $(\ba,b)$.
Since the rigorous computation establishes all relations between $(\ba,b)$ and all $\bx_n$'s,
whereas the mixture approximation does just one relation between $(\ba,b)$ and one particular $\bx_n$.
We introduced two additional assumptions.
First, $\ba$ is parallel to given $\bx_n$.
Since $\ba$ always appears in the form $\ba \cdot \bx_n$,
only the parallel component of $\ba$ could have any effect (on one particular $\bx_n$),
hence we eliminated the extra freedom in the orthogonal component.
Second, the norm $a := \| \ba \|$ has similar scale to the distances $\| \bx_n - \bx_m \|$ between input vectors.
Since $a$ controls the spatial frequency of a hidden unit,
it determines how broad the hidden unit covers the part of the input space.
Namely, $a$ controls which input vectors are selectively responded by the unit.
Therefore, in order to avoid such an isolation case that an unit responds for only one input vector,
we assumed $a$ is no smaller than the distance between input vectors.
In this procedure we set $a$ as a distance $\| \bx_n - \bx_m \|$ of randomly selected two input examples $\bx_n$ and $\bx_m$.
We denote this procedure simply by $a \sim p( \| \bx - \bx' \| )$.
Once $\ba$ is fixed with these assumptions, $b$ is determined as $b = \ba \cdot \bx_n - z$.

Given shape parameters $\alpha, \beta$ of the beta distribution ${\rm Beta}(z; \alpha, \beta)$,
one cycle of our second sampling method is summarized as \refalg{alg:mix}.
This method consists of no more expensive steps.
It scales linearly with the dimensionality of the input space and the number of required sample parameters respectively.
Moreover, it does not depends on the size of the training data.

\begin{algorithm}
  \caption{Quick sampling from mixture annealed distribution (for high dimensional use.)}\label{alg:mix}
   \begin{algorithmic}
     \STATE choose a suffix $n$ of $\bx_n$ according to the mixing probability $\eta_n$.
     \STATE draw $\zeta \sim \mathrm{Beta}(z; \alpha, \beta)$ and $k \sim \mathrm{Bernoulli}(k; p=0.5)$
     \STATE $z \leftarrow (-1)^k \zeta$
     \STATE set length $a \sim p( \bx - \bx' )$.
     \STATE $\ba \leftarrow a \bx_n / \| \bx_n \|$.
     \STATE $b \leftarrow \ba \cdot \bx_n - z$.
   \end{algorithmic}
\end{algorithm}

\end{document}